\newcommand\fs@betterruled{%
  \def\@fs@cfont{\bfseries}\let\@fs@capt\floatc@ruled
  \def\@fs@pre{\vspace{4pt}\hrule height.7pt depth0pt \kern2pt\vspace{-2pt}}%
  \def\@fs@post{\vspace{-2pt}\kern2pt\hrule\relax\vspace{-1pt}}%
  \def\@fs@mid{\vspace{-3pt}\kern2pt\hrule\kern2pt}%
  \let\@fs@iftopcapt\iftrue}
\newcommand{\argmax}{\mathop{\mathrm{argmax}}}
\newtheorem{proposition}{Proposition}
\newtheorem{definition}{Definition}
\title{\LARGE \bf
Secure Planning Against Stealthy Attacks via Model-Free Reinforcement Learning
}
\author{Alper Kamil Bozkurt, Yu Wang, and Miroslav Pajic
\thanks{This work is sponsored in part by the ONR under agreements N00014-17-1-2504, N00014-20-1-2745 and N00014-18-1-2374, AFOSR award number FA9550-19-1-0169, and the NSF CNS-1652544 grant.}
\thanks{Alper Kamil Bozkurt, Yu Wang, and Miroslav~Pajic are with Duke University, Durham, NC 27708, USA, {\tt\small \{alper.bozkurt, yu.wang094,  miroslav.pajic\}@duke.edu}}}
\begin{document}

\maketitle
\thispagestyle{empty}
\pagestyle{empty}

\begin{abstract}
We consider the problem of security-aware planning in an unknown stochastic environment, in the presence of attacks on control signals (i.e., actuators) of the robot. We model the attacker as an agent who has the full knowledge of the controller as well as the employed intrusion-detection system and who wants to prevent the controller from performing tasks while staying stealthy. 
We formulate the problem as a stochastic game between the attacker and the controller and present an approach to express the objective of such an agent and the controller as a combined linear temporal logic (LTL) formula. We then show that the planning problem, described formally as the problem of satisfying an LTL formula in a stochastic game, can be solved via model-free reinforcement learning when the environment is completely unknown. Finally, we illustrate and evaluate our methods on two robotic planning case~studies.
\end{abstract}

\section{Introduction} 
\label{section:introduction}

Security is an important concern for robotic %
systems working in critical applications. Malicious attacks on these systems can happen from various sources, exploiting vulnerabilities in, for instance,  system sensing (e.g., GPS~\cite{kerns2014unmanned,psiaki_AttackersCanSpoof_2016,kwon2016real} or automotive speed sensors~\cite{shoukry_NoninvasiveSpoofingAttacks_2013}) or software design~\cite{barthe_FacetsSoftwareDoping_2016,chowdhury2019survey}. To prevent %
such anomalous behaviors, a common approach is  to incorporate an intrusion detection system (IDS) into the overall system design. These components %
monitor in runtime key parts of the system, %
raising alarms when unexpected changes or behaviours, potentially caused by attacks, are~observed.

Although the use of IDS has significantly enhanced security guarantees in robotics, by limiting available attack vectors, 
sophisticatedly crafted attacks that are stealthy to IDS (i.e., undetected by the IDS) can still have significant impact on system performance. Stealthy attacks for control systems have been extensively studied. Using knowledge of the system models, various kinds of stealthy attacks have been designed, such as replay attacks (e.g.,~\cite{mo_SecureControlReplay_2009}), covert attacks (e.g.,~\cite{smith_CovertMisappropriationNetworked_2015}), zero dynamic attacks (e.g.,~\cite{teixeira_RevealingStealthyAttacks_2012}), as well as attacks for specific types of IDSs (e.g.,~\cite{mo2010false,kwon2014analysis,jovanov_tac19}). Also, without knowledge of the system models (i.e., black/grey-box attacks), machine learning-based methods have been recently used to %
design stealthy attacks~\cite{li_ConAMLConstrainedAdversarial_2020,zizzo_AdversarialMachineLearning_2019,feng_DeepLearningbasedFramework_2017}. %

In this work, we focus on defending against stealthy actuation attacks (i.e., on control signals) in robotic planning, where the dynamics are typically captured by finite-state probabilistic models. Specifically, we provide a reinforcement learning (RL)-based framework for security-aware design of control strategies that maximizes resilience of the robotic task to adversarial actions. To achieve this, we adopt a game-theoretic approach and capture the interaction between the (high-level) system controller and the attacks as turn-based \emph{stochastic games}~\cite{shapley1953}, which is an extension of Markov decision processes (MDPs) with~two~players. %

The objectives for robotic planning tasks with temporal properties %
are commonly expressed  by  formulas %
in linear temporal logic (LTL) (e.g.,~\cite{kress-gazit2007,guo2013revising}). 
The fulfillment of an LTL objective %
may not only depend on the current state (e.g., safety) but also on the whole system execution (e.g., visiting a state infinitely~often for surveillance). Hence, our security-aware planning focuses on design of control strategies that maximize the worst-case (due to the attacker actions) probability that the given LTL task specification $\varphi_{\textsc{task}}$ is~satisfied.

For the security measure, we consider a very general class of logic-based IDSs that monitor the system execution and trigger alarm if a given LTL formula $\varphi_{\textsc{ids}}$ is satisfied. This includes the common window-based IDSs that trigger alarm when there are several unexpected system transitions within a time window or all past moves in general~\cite{bonakdarpour2018opportunities,medhat2015runtime,havelund2002synthesizing,moonjookim_FormallySpecifiedMonitoring_1999}. 
Following this problem setup, we consider the defense against two types of stealthy attacks with an increasing level of aggressiveness. In Case I, the attacker tries to sabotage the task objective without triggering alarm from the IDS. %
In Case II, the attacker can take the risk of being detected by IDS if the probability of finally sabotaging the initial LTL objective is maximized. For these types of attacks, we solve the game between the attacker and controller, and derive the optimal defense strategies for the controller.

Furthermore, to allow the use of our security-aware planning methodology for robotic systems with unknown models, %
we adopt a model-free RL approach to solve the game between the attacker and controller. %
Although solving stochastic games with temporal logic objectives by model-based methods is well-studied, there are few works on model-free methods that do not depend on the knowledge on the transition probabilities. By exploiting recent results from~\cite{bozkurt2020}, to the best of our knowledge, we introduce the first model-free RL method for stochastic games that are related to secure planning for LTL objectives. Finally, with two case studies focused on robotic surveillance and task sequencing, we demonstrate applicability of our methodology for security-aware robotic~planning.

\section{Preliminaries and Problem Statement}
\label{section:problem_formulation}

\subsection{Stochastic Games}
Stochastic games provide a powerful framework to model and reason about %
behavior of multiple self-interested agents in stochastic environment, capturing both nondeterministic and stochastic transitions. We focus on a scenarios where there are only two agents who are strictly competitive (zero-sum), which means if one agent wins, then the other loses. %
The controller (i.e., Player~1) wants to perform a given task, whereas the attacker (i.e., Player~2) wants to prevent~that.

\begin{definition}[Stochastic Games] \label{def:sg}
A (labeled turn-based) two-player stochastic game is a tuple $\mathcal{G}=(S,(S_\mu,S_\nu,S_p), \allowbreak A, P, s_0, \textnormal{AP}, L)$, where
$S = S_\mu \cup S_\nu \cup S_p$ is a finite set of states, with $S_\mu$, $S_\nu$ and $S_p$ being disjoint; $S_\mu$ and $S_\nu$ are the sets of states where the controller or the attacker, respectively, choose actions; 
$S_p$ is the set of stochastic states;
$s_0 \in S$ is an initial state;
$A$ is a finite set of actions and $A(s)$ denotes the set of actions that can be taken by the controller or the attacker if $s\in S_\mu$ and $s \in S_\nu$ respectively, and denotes a single dummy action for all $s\in S_p$;
$P: S \times A \times S \to [0,1]$ is a transition probability function such that $P(s, a, s') \in \{0,1\}$ for $s \in S_\mu \cup S_\nu$, and $\sum_{s' \in S} P(s, a, s')=1$ if $a\in A(s)$, and $0$ otherwise for all $s \in S$;
\textnormal{AP} is a finite set of atomic propositions; and
$L\hspace{-2pt}:\hspace{-2pt}S\to 2^{\textnormal{AP}}$ is a labeling~function.
\end{definition}

In a stochastic game $\mathcal{G}$, the successor of a state $s$ is chosen by the controller if $s\in S_\mu$ or by the attacker if $s\in S_\nu$; and if $s\in S_p$, it is randomly chosen according to the probability distribution $P(s,A(s),\cdot)$. A \textit{path} $\sigma=s_0s_1,\dots$ is the infinite sequence of the visited states. For simplicity, we denote the state $s_t$ by $\sigma[t]$ and the suffix $s_{t}s_{t+1}\dots$ by $\sigma[t{:}]$. 

{The strategies of the controller and attacker are captured by a function that maps a finite prefix, i.e., a history of visited states, to a probability distribution over the actions that can be taken in the last state. Here, we focus on pure and finite-memory strategies as %
we will show later (in Sec.~\ref{section:rl})) that they suffice for security-aware planning for LTL tasks~\cite{chatterjee2012}.}

\begin{definition}%
A \emph{finite-memory strategy} for a game $\mathcal{G}$ is a tuple $\pi=(M,\Delta,\alpha,m_0)$ where: $M$ is a finite set of modes; $\Delta\hspace{-2pt}: M\hspace{-2pt}\times\hspace{-2pt} S \hspace{-2pt}\to \hspace{-2pt}M$ is a transition function; $\alpha\hspace{-2pt}: M\times S\setminus S_p \to A$ is a function that maps the current mode $m\in M$ and state $s$ to an action in $A(s)$; and $m_0$ is an initial mode.
A \emph{controller strategy} $\mu$ is a finite-memory strategy %
that only maps states in $S_\mu$ to actions (i.e., $\alpha: M{\times} S_\mu {\to} A$). Similarly, an \emph{attacker strategy} $\nu$ is a finite-memory strategy where $\alpha: M{\times} S_\nu {\to} A$.
\end{definition}

An optimal controller strategy is defined as a strategy under which the probability that the controller successfully performs a given task is maximized in the worst-case. 

\subsection{Capturing Temporal Specifications}
LTL extends the propositional logic with temporal modalities: next ($\bigcirc$) and until ($\textsf{U}$). LTL formulas are constructed via a recursive combinations of Boolean operators and temporal modalities using the following syntax~\cite{baier2008}: 
\begin{align}
    \varphi \coloneqq  \mathrm{true} \mid a \mid \varphi_1 \wedge \varphi_2 \mid \neg \varphi \mid \bigcirc \varphi \mid \varphi_1 \textsf{U} \varphi_2, ~ {a\in\textnormal{AP}}, \label{eq:ltl}
\end{align}
where $\textnormal{AP}$ is a set of atomic propositions.

LTL formulas specify properties of infinite paths of %
games. %
In addition to the satisfaction of the standard logical operations, the fulfillment of an LTL formula $\varphi$ on a path $\sigma$ of game $\mathcal{G}$, denoted by $\sigma {\models} \varphi$, is recursively defined~as: 
$\sigma$ satisfies an atomic proposition $a$, if $a {\in} L(\sigma[0])$;
$\sigma$ satisfies $\bigcirc \varphi$ if $\sigma[1{:}]$ satisfies  $\varphi$; 
and finally, 
$\sigma {\models} \varphi_1 \textsf{U} \varphi_2$, if $\exists i.\ \sigma[i] {\models} \varphi_2$ and $ \forall j{<}i. \ \sigma[j] \models \varphi_1$.
Also, we write %
(eventually) $\lozenge \varphi $ for $ \mathrm{true}\ \textsf{U}\ \varphi$, (always) $\square \varphi $ for $ \neg (\lozenge \neg \varphi)$, and
\begin{align}
    \lozenge^{\leq k}\varphi \coloneqq \bigcirc \varphi \vee \bigcirc \bigcirc \varphi \vee \dots \vee \underbrace{\bigcirc \bigcirc \dots \bigcirc}_{k\text{ times}} \varphi.
\end{align}

Deterministic Rabin automata (DRAs) offers a systematic way for LTL model checking. An LTL formula can be represented by a DRA that accepts a path if and only if the path satisfies the LTL formula; %
the acceptance conditions of DRAs are defined for infinite paths \cite{baier2008}.
\begin{definition}%
\label{def:dra}
A DRA is a tuple $\mathcal{A}= (Q,\Sigma, \delta, q_0, \textnormal{Acc})$ where
$Q$ is a finite set of states; 
$\Sigma$ is a finite alphabet; 
$\delta: Q \times \Sigma \to Q$ is the transition function; 
$q_0 \in Q$ is an initial state; 
$\textnormal{Acc}$ is a set of $k$ accepting pairs $\{(C_i, B_i)\}_{i=1}^k$ such that $C_i, B_i \subseteq Q$ such that an infinite path $\sigma$ is accepted if 
    \begin{align}
        \exists i: \ \text{inf}(\sigma)\cap C_i =\varnothing \ \wedge \ \text{inf}(\sigma) \cap B_i \neq \varnothing, \label{eq:dra_acc}
    \end{align}
where $\text{inf}(\sigma)$ is the set of states visited infinitely often during the execution induced by the labels of~$\sigma$.
\end{definition}

A pair $(\mu,\nu)$ of a controller strategy $\mu$ and an attacker strategy $\nu$ in a game $\mathcal{G}$ induces a Markov chain (MC), denoted by $\mathcal{G}_{\mu,\nu}$. The probability that a path $\sigma$ sampled from an MC $\mathcal{G}_{\mu,\nu}$ satisfies the LTL formula $\varphi$ is defined as:
\begin{align}
    Pr_{\mu,\nu}(\mathcal{G} \models \varphi) \coloneqq Pr_{\sigma \sim \mathcal{G}_{\mu,\nu}} \big\{ \sigma \mid \sigma \models \varphi \big\}.
\end{align}
The objective in to find a controller strategy $\mu_\varphi$ such that
\begin{align}
    \mu_\varphi = \argmax\nolimits_\mu \min\nolimits_\nu Pr_{\mu,\nu}(\mathcal{G} \models \varphi) \label{eq:optimal_objective_strategy}
\end{align}
where $\mu$ and $\nu$ denote any finite-memory controller and attacker strategies, respectively. {Note that $\mu_\varphi$ might not be unique}; in that case, slightly abusing the notation, we let $\mu_\varphi$ to denote the set of all such controller strategies.

\subsection{Problem Statement: Security-Aware Planning}
This work is motivated by the reported susceptibility of autonomous systems to attacks~\cite{pajic_csm17,kerns2014unmanned,kwon2016real}.
Several~methods are proposed to deal with attacks on system sensing, both for low-level control (e.g.,~\cite{pajic_tcns17,bezzo_iros14,chang2018secure}), and path planning (e.g., \cite{elfar_cav19, elfar_icra19}). %
However, 
to the best of our knowledge,~no methods exist to design controller strategies for uncertain~environments, such that the derived controllers are maximally resilient to  actuation attacks %
-- this is the focus of this~work. 

Specifically, we consider scenarios where a smart attacker can modify control actions with the goal of preventing the robot from performing the given task, captured by an LTL objective $\varphi_{\textsc{task}}$. We assume that the system has an Intrusion Detection System (IDS) used to detect system anomalies; thus, the attacker's objective is also to remain \emph{stealthy}. This can be effectively achieved  by injecting non-aggressive and incremental control perturbations that exploit probabilistic uncertainties in robot motion (captured by the system model). 

Consequently, in this work, we address the problem of \emph{synthesizing security-aware controller strategies} that maximize the worst-case (i.e., even for the most damaging attacker actions) probability of satisfying the LTL mission objectives $\varphi_{\textsc{task}}$. In addition, we assume that the controller does not know model of the system. Hence, the problem is formulated as a stochastic game $\mathcal{G}$ where the transition probabilities and the topology of the game are unknown, and our goal is to design a \emph{model-free} reinforcement learning (RL)-based framework for synthesis of such security-aware controller strategies. Note that since no attacks on sensing are considered in this work, both the controller and the attacker have the knowledge of the current state of the robot.

In the rest of the paper, we first show how to `combine' the attacker's stealthiness constraint with the control objective (Sec.~\ref{section:ltl}); this enables us to formulate this problem as a control synthesis problem from such combined LTL formulas for stochastic games. Then, we show how model-free RL can be used to derive such maximally resilient (i.e., security-aware) controller strategies (Sec.~\ref{section:rl}), 
by exploiting suitably crafted rewards and discounting based on the LTL~formula.

\section{LTL Specification of Security-Aware Controller Objectives}\label{section:ltl}

In this section, %
we show how the IDS' triggering condition %
can be captured by a winning LTL specification for the game.

\subsection{Controller Objective as Winning Condition}

The controller's main objective is to %
successfully perform a given task $\varphi_\textsc{task}$. %
As with previous works~e.g.,~\cite{fainekos_TemporalLogicMotion_2009,kress-gazit_SynthesisRobotsGuarantees_2018}, we capture the task $\varphi_\textsc{task}$ by LTL; this includes common planning tasks such as avoidance ($\square \neg \texttt{\small unsafe}$), 
liveness/recurrence ($\square \lozenge \phi$), 
persistence ($\lozenge \square \texttt{\small safe}$), 
coverage of other tasks ($\lozenge \phi_1 \wedge \lozenge\phi_2 \wedge \dots \wedge \lozenge\phi_n$) or 
sequencing of other tasks ($\lozenge (\phi_1 \wedge \lozenge(\phi_2 \wedge \dots \wedge \lozenge\phi_n)\dots)$).

In addition, we assume that after the attack is detected, the attacker is no longer capable of attacking, and thus additional requirement for the controller can be stated as attack detection %
$\varphi_\textsc{ids}$ -- we discuss in detail how to construct LTL formula $\varphi_\textsc{ids}$ in Sec.~\ref{subsection:ids}. Therefore, 
the winning condition for the controller can be captured in LTL as:%
\begin{align}
    \varphi_\textsc{win} = \varphi_{\textsc{ids}} \vee \varphi_{\textsc{task}}. \label{eq:win}
\end{align}

There are several important implications of $\varphi_\textsc{win}$ in \eqref{eq:win}. First, if the controller can always satisfy $\varphi_{\textsc{task}}$ regardless of being under an attack, then the controller does not need to use an IDS. Similarly, if $\varphi_{\textsc{ids}}$ can never be satisfied, the IDS mechanism does not help the controller at all.
Second, $\varphi_\textsc{ids}$ should be satisfied only if there is an actual attack; otherwise, with the above winning condition, the controller has an incentive to look for situations where a false alarm can be raised, in order to win the game. Third, detecting an attack should not be enough to win the game as the controller still needs to be able to perform the task. In addition, the attacker might risk being detected for an attack that may cause the controller to end up in a state from which the task cannot be %
satisfied. Hence, %
$\varphi_\textsc{ids}$ needs to be extended to also cover these cases, as done in the rest of the section. %

\subsection{Intrusion Detection in Stochastic Environments} \label{subsection:ids}

An IDS monitors system evolution, %
continuously evaluating  if %
the observed behavior deviates from the expected ones. Analysis of several IDS is recently performed in the context of security-aware control (e.g.,~\cite{mo2010false,kwon2014analysis,jovanov_tac19}). When executing in uncertain environments (e.g., modeled as stochastic games), such IDS commonly provide probabilistic guarantees, and can potentially cause (e.g., fixed-rate) false alarms.

For example, consider a planning scenario where a %
robot may not move in the intended direction with a probability of at most $p_\varepsilon$. %
In practice, effectively monitoring such probabilistic behaviors is mapped into 
counting the number of unintended moves in a pre-specified time-window or in all past moves in general~\cite{bonakdarpour2018opportunities,medhat2015runtime,havelund2002synthesizing,moonjookim_FormallySpecifiedMonitoring_1999}. %
Consider e.g., that for every four-time-step window, an alarm should be raised if the count is larger than 1; in this case, even without an attacker, %
when $p_\varepsilon=0.1$, the likelihood of a false alarm %
is $0.0523$ and the expected number of steps before an alarm %
is  $47$. 

For any $p_\varepsilon{>}0$, any such window-based IDS eventually raises alarm, even without attacker. 
Thus, when considering large-enough (i.e., infinite) paths, if a false alarm is considered as an attacker being detected, then the controller wins the game by satisfying $\varphi_\textsc{ids}$ and thereby $\varphi_\textsc{win}$, even when the attacker is inactive. %
On the other hand, to deal with the false alarms in practice, the system would perform a thorough %
inspection on the next steps (with a significant overhead) such that any attacks %
are detected based on the employed attack vectors (i.e., actual attack surfaces~\cite{lesi_rtss17,hasan2016exploring}). 
If the IDS does not detect any attack for a pre-determined number of time steps during the close inspection, the system goes back to its default execution mode.

Consequently, following~\cite{bonakdarpour2018opportunities,medhat2015runtime,havelund2002synthesizing,moonjookim_FormallySpecifiedMonitoring_1999}, we provide an LTL formulation that can capture a large class of IDSs that satisfy the aforementioned constraints. Consider the following window-based IDS described by the LTL formula:
\begin{align}
    \varphi_{\textsc{alarm}} = \lozenge\big(\texttt{\footnotesize anomaly} \wedge \bigcirc \lozenge^{\leq m} \texttt{\footnotesize anomaly}\big) \label{eq:alarm}
\end{align}
where \texttt{\small anomaly} is the label of unexpected executions. The alarm formula $\varphi_{\textsc{alarm}}$ can be interpreted as: \emph{raise an alarm if an anomaly occurs and after that another one occurs within the next $m+1$ time steps}. Such formula can be easily extended to count/allow for more anomalies within~a different window size, by adding nested $\bigcirc\lozenge^{\leq m_i}$, as in e.g.,
\begin{align}
\varphi_{\textsc{alarm}_2} {=} \lozenge(\texttt{\footnotesize anomaly} {\wedge} {\bigcirc} {\lozenge}^{\leq m_1} ( \texttt{\footnotesize anomaly} {\wedge} {\bigcirc} {\lozenge}^{\leq m_2} \texttt{\footnotesize anomaly}) )\notag
\end{align}
We can also count consecutive anomalies by setting~some $m_i$ to zeros. Many IDS mechanisms can be inherently~described as a reachability property, which can be %
specified by the logical operators and %
temporal modalities $\bigcirc$, $\lozenge$ and $\lozenge^{\leq k}$.

After an alarm occurs, the system switches to a high-alert mode and can detect
any new attacks. %
This is %
captured~by 
\begin{align}
    \varphi_{\textsc{detect}} = \lozenge^{\leq n} \texttt{\footnotesize attack}, \label{eq:detect}
\end{align}
where \texttt{\small attack} is the label of any transition considered as an attack by the IDS. The detection formula $\varphi_{\textsc{detect}}$ simply means that any attack within $n+1$ time steps will be detected. %

We can integrate a detection formula into an alarm formula to obtain a combined IDS formula. This is achieved by adding $\wedge \bigcirc \varphi_{\textsc{detect}}$ next to the last \texttt{\small anomaly} in the alarm formula that triggers the alarm. For example, if we integrate $\varphi_{\textsc{detect}}$ in \eqref{eq:detect} to $\varphi_{\textsc{alarm}}$ in \eqref{eq:alarm}, we obtain:
\begin{align}
\label{eq:ids}
    \varphi_{\textsc{ids}} = \lozenge(\texttt{\footnotesize anomaly} \ \wedge \bigcirc \lozenge^{\leq m} (\texttt{\footnotesize anomaly} \wedge \notag \bigcirc \lozenge^{\leq n} \texttt{\footnotesize attack} )). 
\end{align}
Note that the negation of $\varphi_{\textsc{ids}}$ nicely reflects the behavior of a stealthy attacker. An attacker with the objective $\neg\varphi_{\textsc{ids}}$ wants to stay undetected at all costs and is reluctant to frequently take actions to avoid triggering alarm, so that the attacks are `hidden' within the stochastic behavior of the environment.

One problem with this formulation is that it does not express the cost of carrying out the  close system inspection. %
This, unfortunately, gives an incentive for the controller to trigger a false alarm. For example, the controller may try to move to the states that exhibit a high degree of randomness to increase its chance to switch to the high-alert mode, after which it can continue to perform its task without the fear of being under attack for the predefined number~of~steps. We discuss a strategy to overcome this problem in Sec.~\ref{section:rl}. Furthermore, even if the attacker is detected, the robot still needs to perform its task; we address this~as~follows.

\subsection{Performing Tasks After Attack Detection}\label{section:after_attack}

The attacker's goal is to %
prevent satisfaction of the control task.
Thus, %
it may be beneficial for the attacker to %
launch an attack even during the high-alert mode (i.e., at the cost of being detected and eliminated),
if 
the controller could no longer %
fulfill its task after the attack. 

This can %
be embedded into the previously described LTL formula $\varphi_{\textsc{ids}}$ 
by replacing \texttt{\small attack} in $\varphi_{\textsc{ids}}$ with $\texttt{\small attack} \wedge \bigcirc \lozenge \texttt{\small attack}$. For example, the previous $\varphi_{\textsc{ids}}$ %
becomes
\begin{align*}
\varphi_{\textsc{ids}} {=} {\lozenge}(&\texttt{\scriptsize anomaly} {\scriptstyle \wedge}
{\scriptstyle \bigcirc} {\scriptstyle \lozenge}^{\scriptscriptstyle \leq m} (\texttt{\scriptsize anomaly} {\scriptstyle \wedge} {\scriptstyle \bigcirc} {\scriptstyle \lozenge}^{\scriptscriptstyle \leq n} (\texttt{\scriptsize attack} {\scriptstyle \wedge} {\scriptstyle \bigcirc} {\scriptstyle \lozenge} \texttt{\scriptsize attack}))).
\end{align*}

Although this might seem as giving a second chance to the attacker after being detected, (s)he cannot attack for a second time because that would result in the attacker immediately losing the game.
Another concern with this formulation is that the IDS must be in the high-alert mode all the time after the first attack to be able to observe a second attack; however, this is not necessary because the second attack cannot happen in practice based on the assumption that the attacker is eliminated after the first attack. Hence, this IDS formulation allows the scenarios where the attacker is detected but might still win the game if the controller fails to perform its task.
\section{Model-Free Learning from the Winning LTL Specifications}%
\label{section:rl}

In this section, we use model-free reinforcement learning to find optimal controller strategies that maximize the (worst-case) probability of satisfying the $\varphi_\textsc{win}$ from~\eqref{eq:win} -- i.e., 
\begin{equation}
    \mu_{\varphi_\textsc{win}} = \argmax\nolimits_\mu \min\nolimits_\nu Pr_{\mu,\nu}(\mathcal{G} \models \varphi_\textsc{win}). \label{eq:winning_strategy}
\end{equation}
Our method requires no knowledge of the transition probabilities or the topology of the game $\mathcal{G}$, which models the interaction between the controller and the attacker. 

\subsection{From LTL to Discounted Rewards for Model-Free RL}

Model-free RL provides an efficient way to search for optimal strategies without reconstructing the explicit model (e.g., the transition probabilities) of the stochastic game from samples. 
However, existing model-free RL methods can only be used for cumulative rewards associated with the states (or transitions) and 
cannot be directly used for LTL objectives on stochastic games.
Thus, we convert the LTL formula $\varphi_\textsc{win}$ into a cumulative discounted reward below.

Let $G(\sigma)$ denote the return, the sum of discounted rewards of a path $\sigma$ of $\mathcal{G}$, defined as:
\begin{align}
    G(\sigma) &\coloneqq \sum\nolimits_{i=0}^{\infty} \left( \prod\nolimits_{j=0}^{i-1} \Gamma(\sigma[j]) \right) \cdot R(\sigma[i]); \label{eq:return}
\end{align} 
here, $R:S\to[0,1)$ and $\Gamma: S \to (0,1)$ are the state-based reward and discount functions. We make the convention that $\prod_{j=0}^{-1} \coloneqq 1$. 
The goal of model-free RL is to find a controller strategy that maximizes the expected return in the worst case:
\begin{align}
    \mu_* = \argmax\nolimits_\mu \min\nolimits_\nu \mathbb{E}_{\sigma \sim \mathcal{G}_{\mu,\nu}} \left[G(\sigma) \right]. \label{eq:optimal_discounted_reward_strategy}
\end{align}
Our goal, now, becomes to design the functions $R$ and a $\Gamma$ in such a way that $\mu_*$ and $\mu_{\varphi_\textsc{win}}$ from~\eqref{eq:winning_strategy} %
become equal.

Recently, there has been an increasing interest in developing of such model-free RL methods %
\cite{bozkurt2019,bozkurt2020,hahn2019,hahn2020}. %
Here, we adopt the method introduced in~\cite{bozkurt2020}, which takes a turn-based stochastic game $\mathcal{G}$ where $P$ is fully unknown, an LTL specification of the winning condition $\varphi_\textsc{win}$, a discount factor $\gamma$, and learns the controller strategy $ \mu_{\varphi_\textsc{win}}$. 
For the discount factors sufficiently close to $1$, if the LTL formula can be translated into a DRA with one accepting pair, the RL algorithm is guaranteed to converge to an optimal controller strategy, otherwise it converges to a controller strategy with a satisfaction probability above an established lower bound. 

The idea is to augment the state space such that it is sufficient to consider only the pure and memoryless strategies so that a state-based $R$ and $\Gamma$ can be defined. Specifically, using the method from~\cite{bozkurt2020}, we construct a product game from the original game $\mathcal{G}$ and the DRA $\mathcal{A}_{\varphi_\textsc{win}}$ obtained from the LTL formula 
$\varphi_\textsc{win}$, thus reducing the winning criteria to the satisfaction of the Rabin acceptance condition from~\eqref{eq:dra_acc}. 
\begin{definition}[Product Games] \label{def:product} A product game of  $\mathcal{G}= (S, (S_\mu,S_\nu,S_p), A, P, s_0, \textnormal{AP}, L)$, a labeled turn-based stochastic game, and a DRA $\mathcal{A}=(Q,2^\textnormal{AP}, \delta, q_0, \text{Acc})$, is the tuple $\mathcal{G}^\times = (S^\times, \allowbreak (S_\mu^\times,S_\nu^\times,S_p^\times), \allowbreak  A^\times, P^\times, s_0^\times, \textnormal{Acc}^\times)$ where:
\begin{itemize}
    \vspace{-1pt}
    \item $S^\times=S\times Q$ is the set of augmented states, and the initial state $s_0^\times$ is $\langle s_0,q_0 \rangle$;
    \item $S_\mu^\times=S_\mu\times Q$ is the set of augmented controller states;
    \item $S_\nu^\times=S_\nu\times Q$ is the set of augmented attacker states;
    \item $S_p^\times=S_p\times Q$ is the set of augmented stochastic states;
    \item $A^\times=A$ is the set of actions where $A^\times(\langle s, q\rangle) = A(s)$ for all $s\in S, q \in Q$; 
    \item $P^\times:S^\times \times A^\times \times S^\times \to [0,1]$ is the transition function:
    \begin{align}
        &P^\times(\langle s,q \rangle,a,\langle s',q'\rangle) {=} 
        \begin{cases}
            P(s,a,s') & \textrm{if } q' {=} \delta(q,L(s)) \\
            0 & \textnormal{otherwise}
        \end{cases} \notag 
    \end{align}

    \item $\textnormal{Acc}^\times$ is a set of $k$ accepting pairs $\{(C_i^\times, B_i^\times)\}_{i=1}^k$ where $C_i^\times = C_i \times Q$ and $B_i^\times = B_i \times Q$.
\end{itemize}
\end{definition}

When there is only a single pair Rabin in the acceptance condition (i.e., $\text{Acc}=\{(C,B)\}$), there is always a pure and memoryless optimal strategies for both the controller and the attacker. This allows to construct simple $R$ and $\Gamma$ functions in~\eqref{eq:return}. For example, a small positive reward can be received whenever a state in $B$ visited since some states in $B$ need to be visited infinitely often to win the game. Similarly, to discourage visiting the states in $C$ infinitely many times, future rewards can be heavily discounted. 
Consequently, Algorithm~\ref{alg:a1} summarizes the steps of our approach.

\setlength{\textfloatsep}{12pt}
\begin{algorithm}[!t]
\begin{algorithmic}
{\small
\STATE \textbf{Input:} LTL formula $\varphi_\textsc{win}$, stochastic game $\mathcal{G}$
\STATE Translate $\varphi_\textsc{win}$ to a DRA $\mathcal{A}_{\varphi_\textsc{win}}$
\STATE Construct the product $\mathcal{G}^\times$ of $\mathcal{G}$ and $\mathcal{A}_\varphi$
\STATE Learn the optimal state-action values $Q_*$ for $\mathcal{G}^\times$ using minimaxQ
\STATE Obtain a greedy strategy $\mu^\times_*$ from $Q_*$
\RETURN a finite-memory controller strategy $\mu_*$ derived from $\mu^\times_*$
}
\end{algorithmic}
\caption{Model-free RL for security-aware control synthesis from LTL specifications.}\label{alg:a1}
\end{algorithm}

\begin{proposition} \label{proposition}
Consider a stochastic game $\mathcal{G}$ and an LTL formula $\varphi_\textsc{win}$ that can be translated to a DRA with a single accepting pair. There exists $\gamma'$ such that for all $\gamma\in(\gamma' , 1)$, %
Algorithm~\ref{alg:a1} converges and returns a pure finite-memory controller strategy satisfying \eqref{eq:winning_strategy}, if the reward $R_\gamma$ and the discount $\Gamma_\gamma$ functions are defined as:

\vspace{-12pt}
\small
\begin{equation*}
    R_\gamma(s^\times) \coloneqq\hspace{-4pt} \begin{cases}
        1\hspace{-2pt}-\hspace{-2pt}\gamma_B(\gamma), & \hspace{-6pt} s^\times \in B^\times \\
        0, &\hspace{-6pt} s^\times \notin B^\times
    \end{cases}, ~~
        \Gamma_\gamma(s^\times)\coloneqq \hspace{-4pt} \begin{cases}
        \gamma_B(\gamma), & \hspace{-6pt} s^\times \in B^\times \\
        \gamma_C(\gamma), & \hspace{-6pt} s^\times \in C^\times \\
        \gamma, &\hspace{-6pt} \textrm{otherwise}
    \end{cases}  \label{eq:reward_discount}
\end{equation*}
\normalsize
where $\gamma_B$ and $\gamma_C$ are functions of $\gamma$ satisfying
\begin{align} \label{eq:gamma_lim}
    \lim_{\gamma \to 1^-} \frac{1 - \gamma}{1 - \gamma_B (\gamma)} = \lim_{\gamma \to 1^-} \frac{1 - \gamma_B(\gamma)}{1 - \gamma_C (\gamma)} = 0. 
\end{align}
\end{proposition}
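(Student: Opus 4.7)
The plan is to decompose the argument into three stages: (i) use the product construction to reduce the LTL winning objective on $\mathcal{G}$ to a single-pair Rabin objective on $\mathcal{G}^\times$; (ii) show that, for $\gamma$ sufficiently close to $1$, maximizing the expected discounted return under $R_\gamma$ and $\Gamma_\gamma$ is equivalent to maximizing the probability of satisfying that Rabin condition; and (iii) invoke convergence of minimax-$Q$ learning on the resulting finite discounted stochastic game. Stages (i) and (iii) are essentially bookkeeping plus citations to known results, while stage (ii) carries the real content of the proposition.

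For stage (i), Definition~\ref{def:product} directly yields a measure-preserving bijection between paths $\sigma$ of $\mathcal{G}_{\mu,\nu}$ and paths $\sigma^\times$ of the induced chain on $\mathcal{G}^\times$, under which $\sigma \models \varphi_\textsc{win}$ iff $\sigma^\times$ satisfies $(C^\times, B^\times)$. Since there is a single accepting pair, the classical determinacy result for stochastic Rabin games with a single pair (as in \cite{chatterjee2012}) guarantees that pure memoryless optimal strategies exist for both players on $\mathcal{G}^\times$; pulling such a strategy back through the DRA component $Q$ yields a pure finite-memory strategy on $\mathcal{G}$, matching the form promised by the proposition.

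Stage (ii) is the technical core. Fix a pair of pure memoryless strategies $(\mu^\times,\nu^\times)$ on $\mathcal{G}^\times$; the induced Markov chain has finitely many BSCCs, each of exactly one of three types: (a) intersects $B^\times$ but not $C^\times$, (b) intersects $C^\times$, or (c) intersects neither. I would show that the expected return from any state is, up to vanishing error in $\gamma$, the probability of reaching a type-(a) BSCC. For type (a), a renewal argument on visits to $B^\times$ inside the BSCC gives an expected return of the form
\begin{equation*}
\frac{(1-\gamma_B)\,\gamma^{\ell_0}}{1-\gamma_B\,\gamma^{\ell-1}},
\end{equation*}
where $\ell,\ell_0$ are (bounded) expected return and hitting times; condition~\eqref{eq:gamma_lim} (the left ratio) forces this quantity to $1$ as $\gamma\to 1^-$. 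For type (b), repeated visits to $C^\times$ multiply the return by arbitrarily many factors $\gamma_C$, and the right ratio in~\eqref{eq:gamma_lim} ensures the resulting return tends to $0$ even compared to $1-\gamma_B$. Type (c) contributes $0$ as well because only finitely many $B^\times$-rewards are collected while the discount product decays exponentially. Because there are only finitely many pure memoryless strategy pairs and finitely many states, the convergence is uniform, so a gap $\gamma'<1$ can be chosen past which the ranking of strategies by discounted value agrees with the ranking by $\Pr(\mathcal{G}^\times \models \text{Acc}^\times)$; this makes $\mu_*$ in~\eqref{eq:optimal_discounted_reward_strategy} coincide with $\mu_{\varphi_\textsc{win}}$ in~\eqref{eq:winning_strategy}.

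Finally, stage (iii) invokes the fact that on a finite zero-sum turn-based stochastic game with a state-based discounted reward, minimax-$Q$ learning converges almost surely to the optimal action-value function $Q_*$, and any greedy strategy extracted from $Q_*$ is optimal for the discounted objective; combined with stage (ii) this yields a pure finite-memory controller strategy satisfying~\eqref{eq:winning_strategy}. The hard part, as indicated above, will be stage (ii): carefully handling the three BSCC cases simultaneously and establishing the uniform-in-strategy convergence needed to pick a single threshold $\gamma'$. The algebraic manipulations behind this are standard once the BSCC decomposition is in place, but checking that both limits in~\eqref{eq:gamma_lim} are used, and in the right direction, is where the proof must be written most carefully.
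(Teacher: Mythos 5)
Your proposal is correct and follows essentially the same route as the paper: the paper's proof simply delegates your stage (ii) to Theorem~1 of~\cite{bozkurt2020} (the statement that, under a fixed strategy pair, the expected return tends to the Rabin-satisfaction probability as $\gamma \to 1^-$), and handles stages (i) and (iii) exactly as you do, via the product construction and the pull-back of the memoryless product-game strategy to a finite-memory strategy whose modes are the DRA states. The only difference is that you sketch the BSCC-based limit argument explicitly rather than citing it.
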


\begin{proof}
The proof immediately follows from the proof of Theorem~1 in~\cite{bozkurt2020}, showing that under a strategy pair, as $\gamma \to 1^-$, the expected return of a path goes to the probability that the path satisfies the Rabin acceptance condition. 
Finally, the obtained strategy in the product game satisfying~\eqref{eq:optimal_discounted_reward_strategy},
induces a finite-memory controller strategy (in the initial game $\mathcal{G}$) satisfying~\eqref{eq:winning_strategy}, where the the modes are the states of the DRA $\mathcal{A}_{\varphi_\textsc{win}}$ with the same transition function. 
\end{proof}

Algorithm~1 can be generalized to situations where the DRA $\mathcal{A}_{\varphi_\textsc{win}}$ has
multiple Rabin pairs, using the approach from~\cite{bozkurt2020}; this results in 
controller strategies with a lower bound on the satisfaction probabilities (see \cite{bozkurt2020} for~details).

\subsection{Efficiency of Learning Controller Strategies}
\vspace{-2pt}

The size of the DRA $\mathcal{A}_{\varphi_\textsc{win}}$ may be double-exponential in the length of $\varphi_\textsc{win}$. %
This %
cannot be prevented in the worst case if $\varphi_\textsc{task}$ is any arbitrary LTL formula. To overcome this, we can restrict the controller tasks to a fragment of LTL, such as Generalized Rabin(1) \cite{ehlers2011}, which can describe {most commonly used} robotic tasks, as well as be efficiently translated into a polynomial-size DRA with a single accepting pair.

Fortunately, any valid IDS mechanism can be expressed as a reachability property, which means if a path satisfies the property, then there must be a prefix of the path such that all the paths having the same prefix also satisfy the property. 
Consider a path $\sigma$, an infinite execution of the game, for which the IDS triggers the alarm -- i.e., $\sigma \models \varphi_\textsc{ids}$. If a finite prefix of $\sigma$ is not enough for the IDS to decide, then even if it can be inferred that adversarial actions have occurred, the IDS could not trigger the alarm in a finite number of~steps. 

The reachability properties form only a small fragment of LTL for which efficient translation to a deterministic finite automata is usually possible \cite{latvala2003}. We can think of such a DFA as a DRA with one accepting state where all the outgoing transitions are self-loops. Therefore, the disjunction of $\varphi_\textsc{ids}$ with $\varphi_\textsc{task}$ in~\eqref{eq:win} only linearly increases the state space of the product game in the length of $\mathcal{A}_{\varphi_\textsc{ids}}$. Additionally, we can utilize the fact that all the incoming transitions to the accept state in the DRA of $\varphi_\textsc{ids}$ are triggered by an action of the attacker. Thus, during the learning phase, whenever a state having a transition to the accepting state is reached, the attacker turn can be skipped. 

Finally, we discuss the effect of choosing different discount factors. As the discount factor $\gamma$ goes to $1$, the convergence rate as well as the stability of the RL algorithms decrease. Hence, we %
start with a smaller $\gamma$ and increase it slowly until the RL algorithm converges to a desired controller strategy. Using smaller $\gamma$ also discourages the controller to wait for a false alarm to be triggered, and thus mitigates the problem discussed in Sec.~\ref{subsection:ids}.
\vspace{-2pt}
\section{Case Studies}
\vspace{-3pt}

For the case studies, we use the %
\textit{CSRL} tool~\cite{csrl2020} %
based on~\cite{bozkurt2020}. %
\textit{CSRL} takes a 2-D labeled grid as a representation of the stochastic game with an LTL formula, and returns a finite-memory controller strategy obtained using the minimax-Q \cite{littman1994} method that follows $\epsilon$-greedy strategies while~learning.

\vspace{-2pt}
\subsection{System Models}
\vspace{-3pt}
We initialized the values of $\epsilon$ (the parameter for $\epsilon$-greedy strategy) and $\alpha$ (the learning rate) to $0.5$, and slowly decreased them to $0.05$ during learning. %
We set the discount factor %
for each case study to be $\gamma{=}0.999$, %
and obtained the controller strategies after $512$K episodes, each %
starting from a random state and stopping after $1$K time steps. 
We used the IDS formula specified in Sec. \ref{section:after_attack} with $m{=}0$ and $n{=}1$:
\begin{equation*}
\varphi_{\textsc{ids}}^{(*)} \hspace{-4pt}=\hspace{-3pt} {\lozenge}\hspace{-1pt}(\texttt{\footnotesize anomaly} {\wedge} {\bigcirc} (\texttt{\footnotesize anomaly} {\wedge}
{\bigcirc} {\lozenge}^{\leq 1}\hspace{-2pt} (\texttt{\footnotesize attack} {\wedge} {\bigcirc} {\lozenge} \texttt{\footnotesize attack}\hspace{-1pt}) \hspace{-1pt})\hspace{-1pt})
\end{equation*}

We used $(7\times 9)$ grid worlds for our planning case studies. %
A robot can move from a cell to an adjacent cell using the controller actions: \textit{North, South, East} and \textit{West}. However, the attacker can observe any action chosen by the controller and replace it with another action. The actions of the controller and attacker are depicted as black and red arrows in the figures, respectively. Due the stochastic %
environment, the robot moves in the intended direction with probability of $0.8$, and sideways with probability of $0.2$ ($0.1$ for each side). The robot stays in the cell, when it tries to move beyond the grid.

\begin{figure*}[!t]
\vspace{4pt}
    \begin{subfigure}{0.3\textwidth}
        \includegraphics[width=\textwidth]{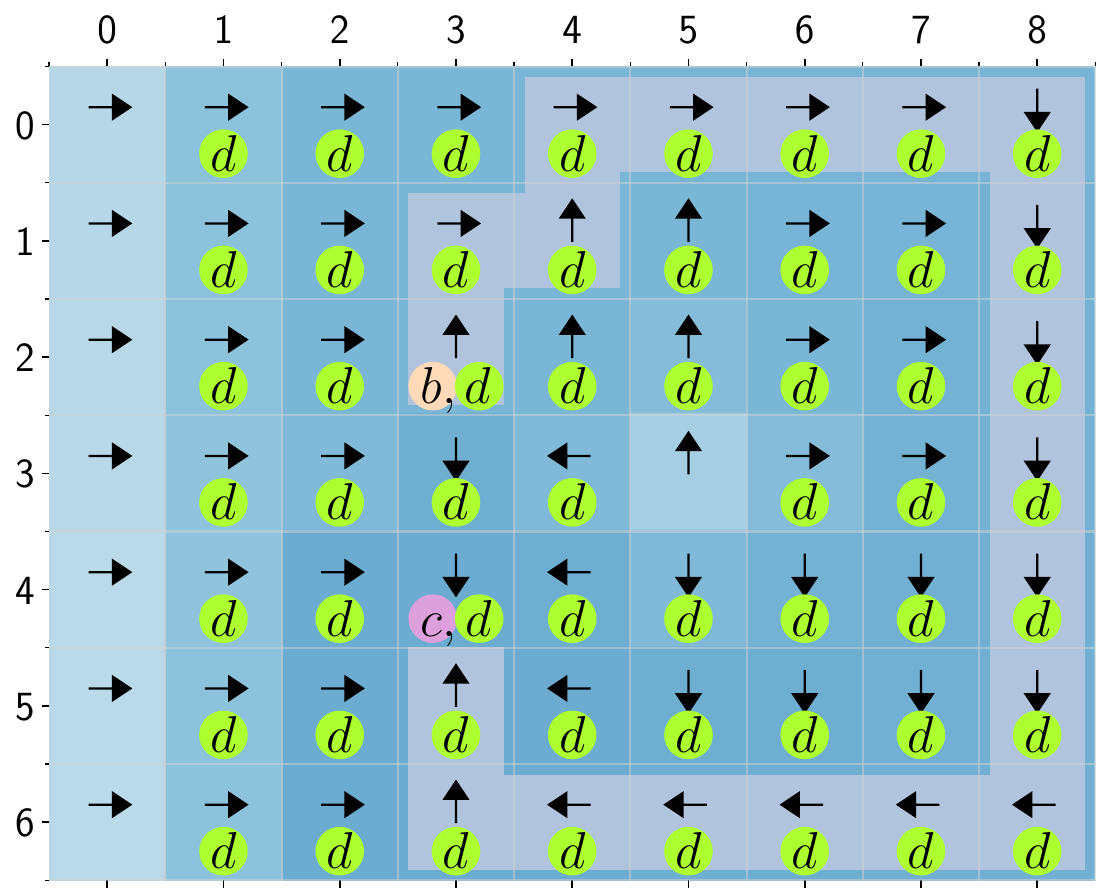}
        \label{fig:coverage_b_to_c}
    \end{subfigure}
    \hspace{2em}
    \begin{subfigure}{0.3\textwidth}
        \includegraphics[width=\textwidth]{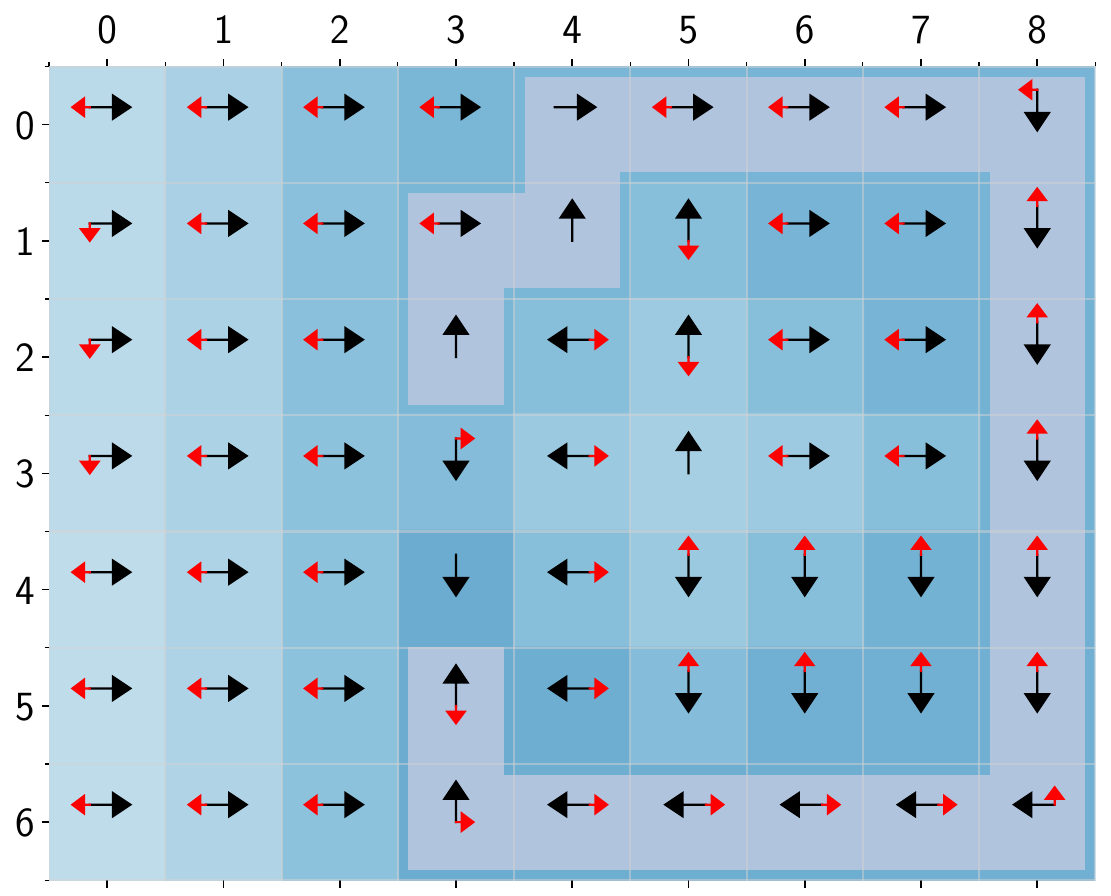}
        \label{fig:coverage_b_to_c_1}
    \end{subfigure}
    \hspace{2em}
    \begin{subfigure}{0.3\textwidth}
        \includegraphics[width=\textwidth]{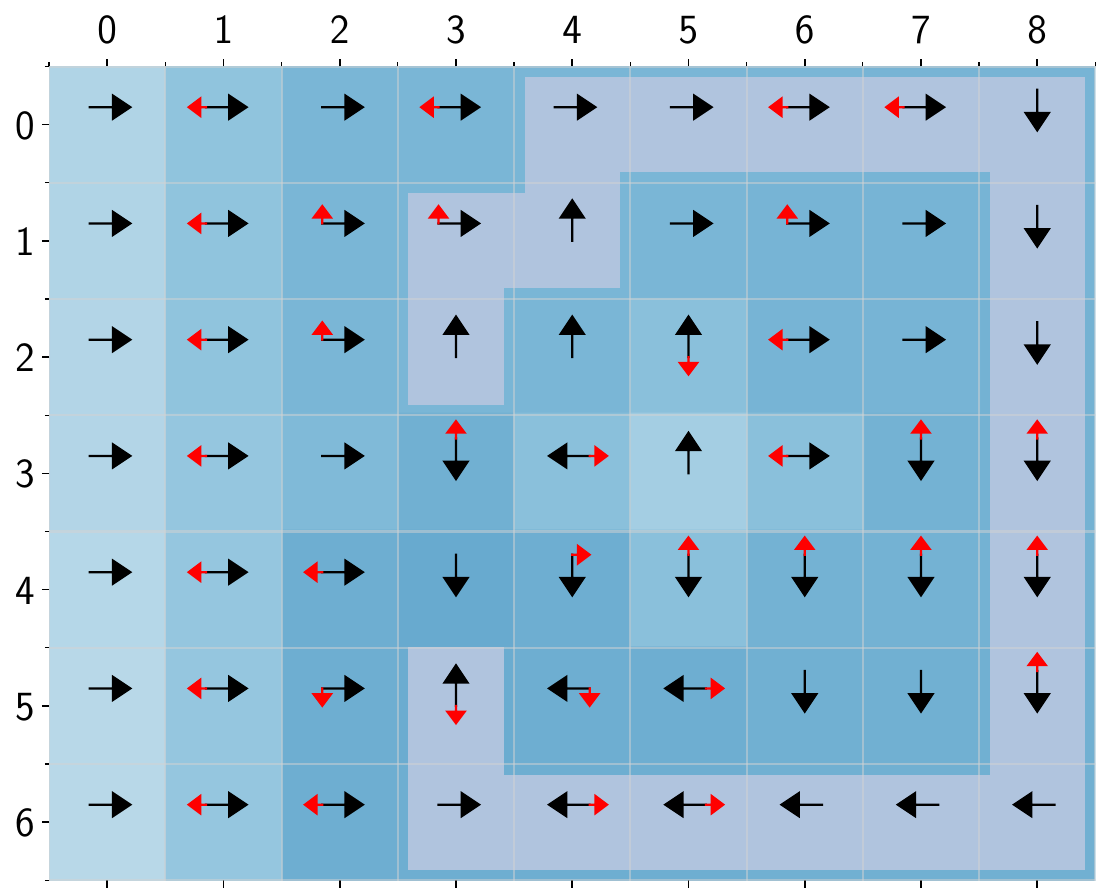}
        \label{fig:coverage_b_to_c_2}
    \end{subfigure}
    \vspace{-19pt}
    \caption{\small Surveillance scenario (from left to right): (a)~The controller strategy from $b$ to $c$ and the cell labels; (b)~The controller and attacker strategies from $b$ to $c$ before any anomaly occurs; (c)~The controller and attacker strategies from $b$ to $c$ after one anomaly.}
    \label{fig:coverage}
    \vspace{-12pt}
\end{figure*}

\begin{figure*}[!t]
    \begin{subfigure}{0.3\textwidth}
        \includegraphics[width=\textwidth]{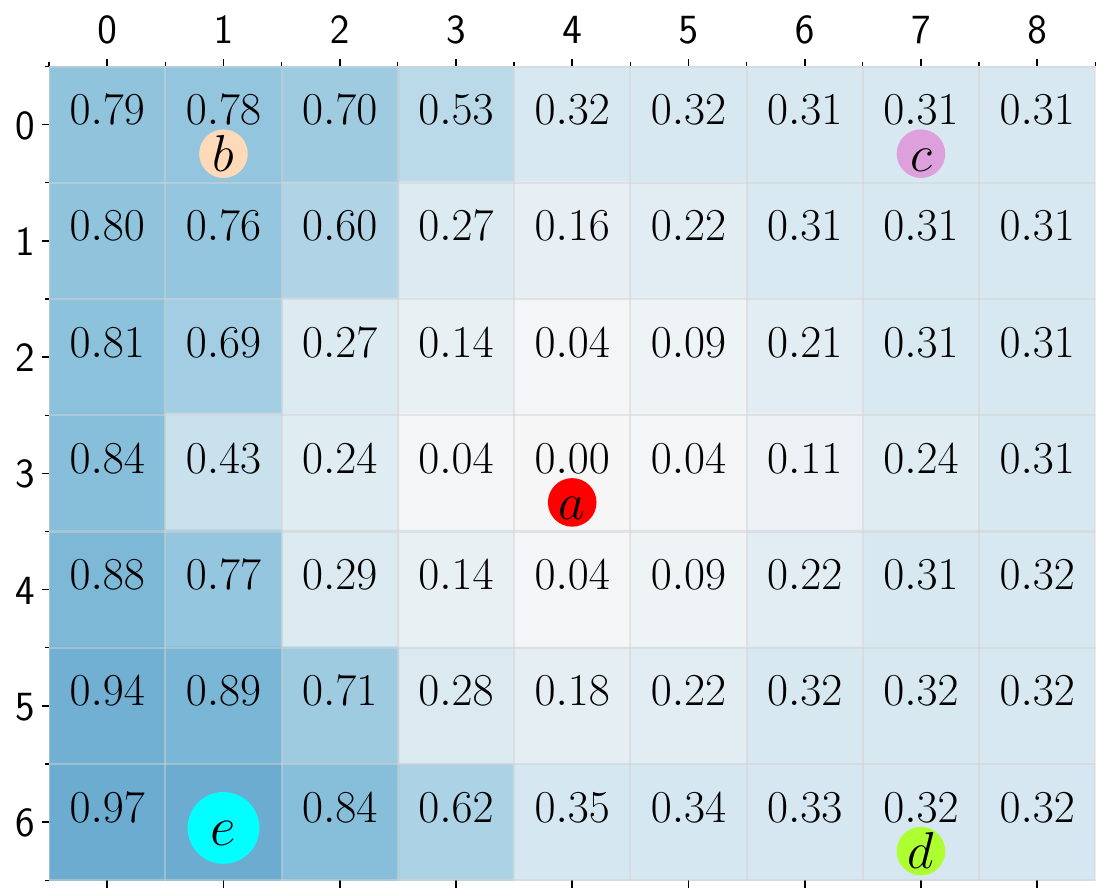}
        \label{fig:sequence_d_to_e}
    \end{subfigure}
    \hspace{2em}
    \begin{subfigure}{0.3\textwidth}
        \includegraphics[width=\textwidth]{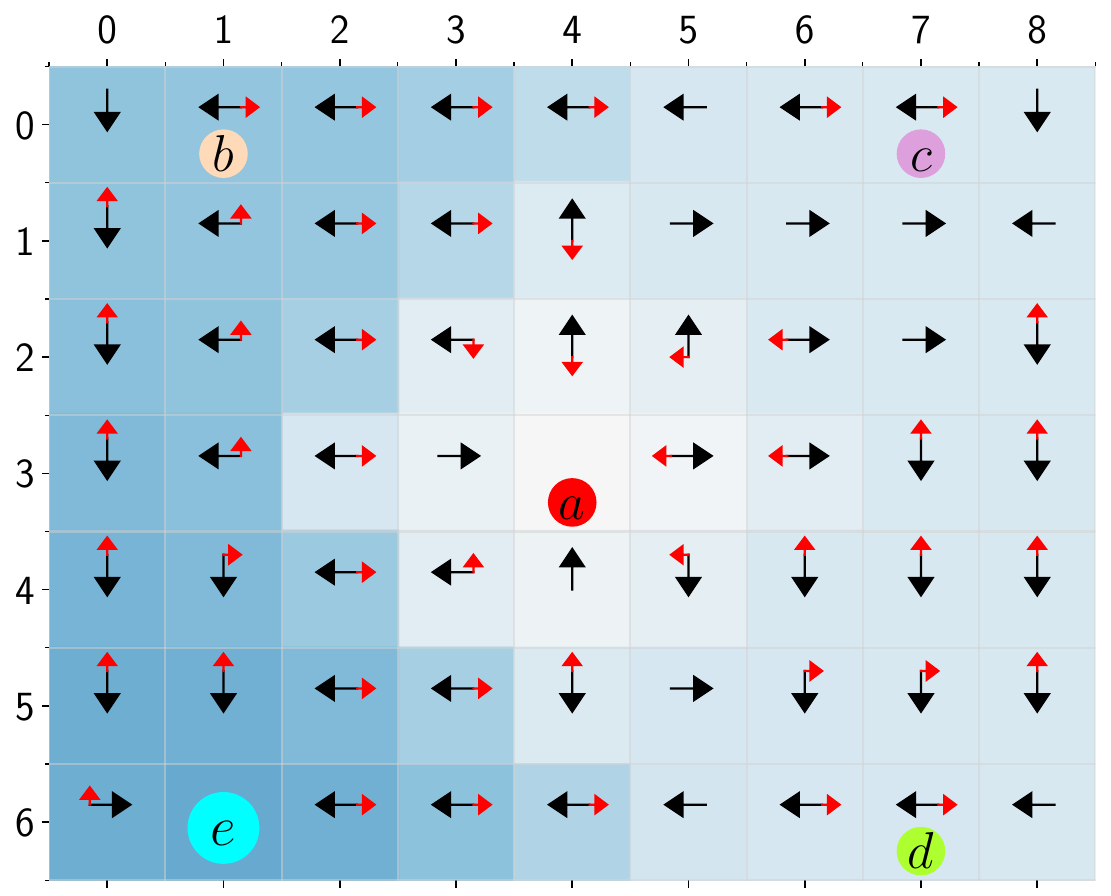}
        \label{fig:sequence_d_to_e_1}
    \end{subfigure}
    \hspace{2em}
    \begin{subfigure}{0.3\textwidth}
        \includegraphics[width=\textwidth]{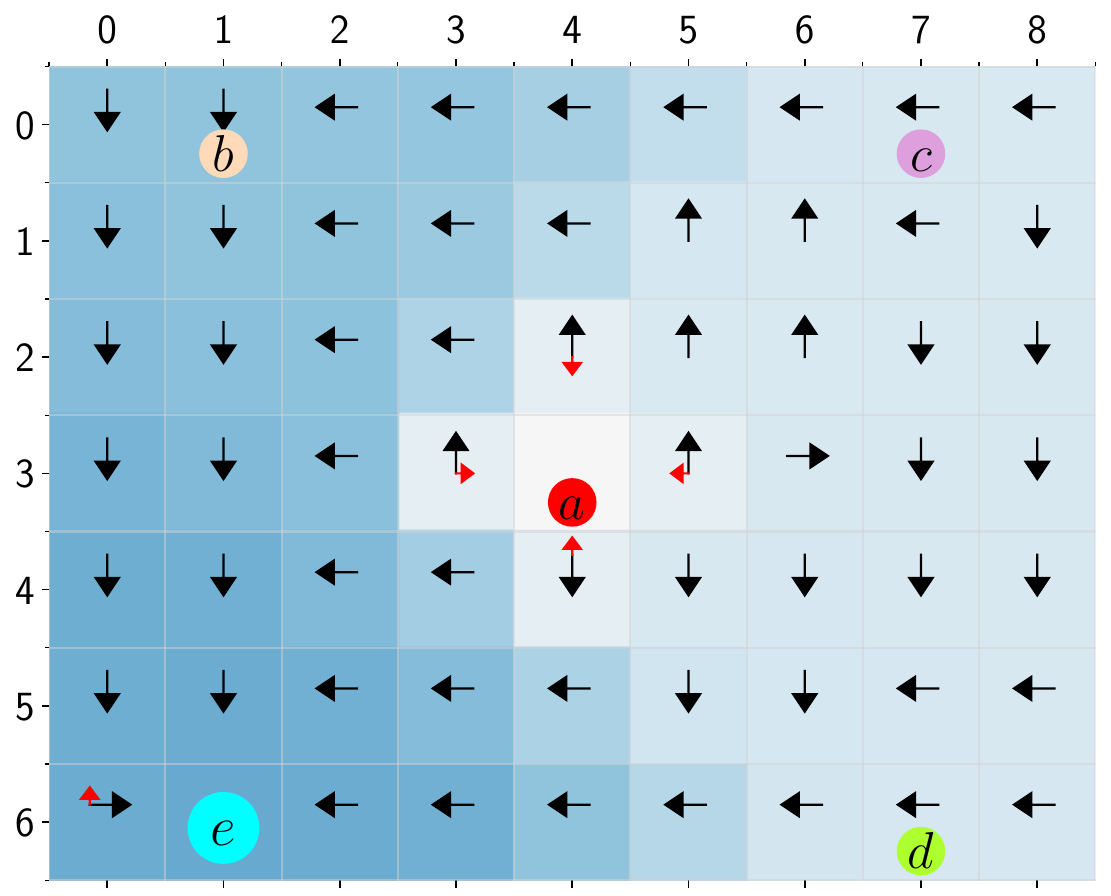}
        \label{fig:sequence_d_to_e_2}
    \end{subfigure}
    \vspace{-19pt}
    \caption{\small Task sequence scenario (from left to right): (a)~The controller strategy from $d$ to $e$ and the cell labels; (b)~The controller and attacker strategies from $d$ to $e$ right after an anomaly occurs; (c)~The controller and attacker strategies from $d$ to $e$ right after~an~alarm.}
    \label{fig:sequence}
    \vspace{-11pt}
\end{figure*}

A grid cell %
corresponds to a controller state in the stochastic game. 
The label of a controller state is displayed in a small circle, %
in the lower part of the corresponding cell in the figures.
After the controller takes an action, the game moves to an attacker state. The attacker observes this transition and either takes the same action or another one, and then the game moves to a stochastic state. For every cell, the controller and attacker actions, we have a different stochastic state in the game. Thus, every stochastic state has a unique parent attacker state and a unique grandparent control~state. 

A stochastic state is labeled with \texttt{\small attack} if and only if the transition from its grandparent state to its parent state, and the one from its parent to itself, are triggered by different actions, i.e., \emph{the attacker modifies the controller action}. %
Even if the \texttt{\small attack} labels are visible to the controller, %
it acts as if they do not exist unless an alarm is triggered; due to the structure of the IDS formula $\varphi_{\textsc{ids}}$ from Sec.~\ref{subsection:ids}, the DRA~of $\varphi_{\textsc{ids}}$ changes its state by an \texttt{\small attack} label only after an alarm.

In a stochastic state, a random transition is made according to the described transition model, to a dummy state first,~then to the controller state of the reached cell. The dummy state is labeled with \texttt{\small anomaly} if the controller state it is connected to is not expected to be reached after the action taken in the great-grandparent controller state. For example, if the controller takes %
\textit{North} and the robot moves east, the dummy state visited is labeled with \texttt{\small anomaly}. We did not explicitly represent the dummy states to keep the learning %
completely model-free; it is enough to make the corresponding DRA transition as if an \texttt{\small anomaly} label is consumed whenever such stochastic transitions occurs during learning.

\vspace{-2pt}
\subsection{Case Study I: Surveillance}
\vspace{-2pt}

In this %
study, the robot tries to repeatedly cover the cells labeled with $b$ and $c$. In addition, after a certain point in time, the robot aims to stay in the \emph{safe} region labeled with $d$, i.e., %
\begin{align}
    \varphi_\textsc{task}^{(1)} = \square \lozenge b \wedge \square \lozenge c \wedge \lozenge \square d. \label{eq:coverage}
\end{align}

The learned %
strategies %
are shown Fig.~\ref{fig:coverage}. For simplicity, we consider only the part of the task where the robot needs to go from $b$ to $c$, but similar results were also obtained for travelling from $c$ to $b$.
Although there is a very short path from $b$ to $c$, the learned controller strategy %
prefers a quite long path. %
There is only one cell between $b$ and $c$, and this cell and all the surrounding cells are in the safe region; however, this is not enough to make the path passing through it secure, because once this cell is visited, the attacker can take two consecutive \textit{East} actions to make the robot visit an unsafe cell with a probability of $0.64$. Thus, %
the robot would be out of the safe region once in a while, %
violating~$\varphi_\textsc{task}$. %

Fig.~\ref{fig:coverage}b and~\ref{fig:coverage}c
show the attacker strategies before an anomaly happens and right after the alarm. %
In three out of the first four cells on the most likely path, the attacker chooses to do nothing (Fig.~\ref{fig:coverage}b).  %
The reason %
is that the attacker does not want to create an unnecessary anomaly in the early part of the path, `saving', in some sense, for the future the ability to create two consecutive anomalies without raising alarm. The attacker strategy in the other parts of the grid %
forces the robot outside the safe region and prevents it from reaching~ $c$.

\vspace{-6pt}
\subsection{Case Study II: Sequence of Tasks}
\vspace{-2pt}
In this scenario, we %
plan for a sequence of tasks %
represented by the labels $b$, $c$, $d$, $e$, %
to be performed in order. %
There is a danger zone labeled with $a$, 
to be avoided
-- i.e., %
\begin{align}
    \varphi_\textsc{task}^{(2)} = \lozenge \Big(b \wedge \lozenge \big(c \wedge \lozenge (d \wedge \lozenge e)\big)\Big) \wedge \square\neg a. \label{eq:sequence}
\end{align}

Here, we present the results and the strategies %
for performing only the last task $e$, i.e., visiting the cell %
$e$; %
however, similar conclusions can be drawn for the other tasks. Fig.~\ref{fig:sequence}a  %
shows the estimated satisfaction probabilities when the IDS has not detected any anomalies. As expected, the probabilities near the danger zone are very low and the probabilities are usually getting lower as the distance to $e$ is increasing.

The satisfaction probabilities in the right part of the grid are significantly lower than the left part. The reason is that while moving from the right part to the left, the minimum number of cells between the robot and $a$ can be at most two, e.g., at $(6,4)$, which allows the attacker to take three consecutive actions, e.g., $3\times$\textit{North} from $(6,4)$, to drag the robot into the danger zone with a probability larger than $0.5$. %
If all the attacks are successful, i.e., the robot moves in the direction the attacker desired, after the first two actions the IDS raises an alarm. After the alarm, %
although the attacker knows (s)he will be detected, (s)he %
attacks again if the robot is next to $a$, as there is a high probability ($0.8$) that the robot moves into the danger zone, %
making the attacker the winner.

Fig.~\ref{fig:sequence}b %
shows the controller and attacker strategies after an anomaly occurs. %
Again, the attacker is more passive in the upper right part of the grid, as (s)he does not want to trigger an alarm when the controller is far from reaching $e$. Fig.~\ref{fig:sequence}c %
shows the strategies when the IDS is in the high-alert mode, in which any attack is immediately detected. The attacker takes an action only in five cells. In the cell at the bottom-left corner, (s)he makes one last attempt to prevent the controller from reaching $e$ because even if (s)he is detected, if (s)he does nothing the controller wins the game with a probability slightly lower than $1$. The other cells are the ones 
next to the danger zone, where %
the attacker sacrifices stealthiness for the high probability that the robot ends up in the danger zone.
\vspace{-1pt}
\section{Conclusion}
\vspace{-2pt}
We studied planning for temporal logic tasks in an unknown stochastic environment in the presence of actuation attacks. %
We formulated the interaction between the controller and stealthy attacker as a stochastic game, where the attacker, knowing the intrusion detection systems (IDS), the task, and the controller; aims to undermine the task while remaining stealthy. %
We then show this planning problem can be solved using model-free reinforcement learning without knowledge of the environment model. Our case studies showed the applicability of our method for security-aware robotic~planning.

\bibliographystyle{unsrt}
\bibliography{references,yu,CPSL@DukePapers}

\begin{thebibliography}{10}

\bibitem{kerns2014unmanned}
Andrew~J. Kerns, Daniel~P. Shepard, Jahshan~A. Bhatti, and Todd~E. Humphreys.
\newblock Unmanned aircraft capture and control via {GPS} spoofing.
\newblock {\em Journal of Field Robotics}, 31(4):617--636, 2014.

\bibitem{psiaki_AttackersCanSpoof_2016}
Mark~L. Psiaki, Todd~E. Humphreys, and Brian Stauffer.
\newblock Attackers can spoof navigation signals without our knowledge.
  {H}ere's how to fight back {GPS} lies.
\newblock {\em IEEE Spectrum}, 53(8):26--53, 2016.

\bibitem{kwon2016real}
Cheolhyeon Kwon, Scott Yantek, and Inseok Hwang.
\newblock Real-time safety assessment of unmanned aircraft systems against
  stealthy cyber attacks.
\newblock {\em Journal of Aerospace Information Systems}, 13(1):27--45, 2016.

\bibitem{shoukry_NoninvasiveSpoofingAttacks_2013}
Yasser Shoukry, Paul Martin, Paulo Tabuada, and Mani Srivastava.
\newblock Non-invasive spoofing attacks for anti-lock braking systems.
\newblock In {\em International Conference on Cryptographic Hardware and
  Embedded Systems}, pages 55--72. Springer, 2013.

\bibitem{barthe_FacetsSoftwareDoping_2016}
Gilles Barthe, Pedro~R. D’Argenio, Bernd Finkbeiner, and Holger Hermanns.
\newblock Facets of software doping.
\newblock In {\em International Symposium on Leveraging Applications of Formal
  Methods}, pages 601--608. Springer, 2016.

\bibitem{chowdhury2019survey}
Abdullahi Chowdhury, Gour Karmakar, and Joarder Kamruzzaman.
\newblock Survey of recent cyber security attacks on robotic systems and their
  mitigation approaches.
\newblock In {\em Cyber Law, Privacy, and Security: Concepts, Methodologies,
  Tools, and Applications}, pages 1426--1441. IGI Global, 2019.

\bibitem{mo_SecureControlReplay_2009}
Yilin Mo and Bruno Sinopoli.
\newblock Secure control against replay attacks.
\newblock In {\em 2009 47th {{Annual Allerton Conference}} on
  {{Communication}}, {{Control}}, and {{Computing}}}, pages 911--918, 2009.

\bibitem{smith_CovertMisappropriationNetworked_2015}
Roy~S. Smith.
\newblock Covert {{Misappropriation}} of {{Networked Control Systems}}:
  {{Presenting}} a {{Feedback Structure}}.
\newblock {\em IEEE Control Systems Magazine}, 35(1):82--92, 2015.

\bibitem{teixeira_RevealingStealthyAttacks_2012}
Andre Teixeira, Iman Shames, Henrik Sandberg, and Karl~H. Johansson.
\newblock Revealing stealthy attacks in control systems.
\newblock In {\em 2012 50th {{Annual Allerton Conference}} on
  {{Communication}}, {{Control}}, and {{Computing}}}, pages 1806--1813,
  {Monticello, IL, USA}, October 2012. {IEEE}.

\bibitem{mo2010false}
Yilin Mo and Bruno Sinopoli.
\newblock False data injection attacks in control systems.
\newblock In {\em First workshop on Secure Control Systems}, pages 1--6, 2010.

\bibitem{kwon2014analysis}
Cheolhyeon Kwon, Weiyi Liu, and Inseok Hwang.
\newblock Analysis and design of stealthy cyber attacks on unmanned aerial
  systems.
\newblock {\em Journal of Aerospace Information Systems}, 11(8):525--539, 2014.

\bibitem{jovanov_tac19}
Ilija Jovanov and Miroslav Pajic.
\newblock Relaxing integrity requirements for attack-resilient cyber-physical
  systems.
\newblock {\em IEEE Transactions on Automatic Control}, 64(12):4843--4858, Dec
  2019.

\bibitem{li_ConAMLConstrainedAdversarial_2020}
Jiangnan Li, Jin~Young Lee, Yingyuan Yang, Jinyuan~Stella Sun, and Kevin
  Tomsovic.
\newblock {{ConAML}}: {{Constrained Adversarial Machine Learning}} for
  {{Cyber}}-{{Physical Systems}}.
\newblock {\em arXiv:2003.05631 [cs]}, March 2020.

\bibitem{zizzo_AdversarialMachineLearning_2019}
Giulio Zizzo, Chris Hankin, Sergio Maffeis, and Kevin Jones.
\newblock Adversarial {{Machine Learning Beyond}} the {{Image Domain}}.
\newblock In {\em Proceedings of the 56th {{Annual Design Automation
  Conference}} 2019}, {{DAC}} '19, pages 1--4, {Las Vegas, NV, USA}, June 2019.
  {Association for Computing Machinery}.

\bibitem{feng_DeepLearningbasedFramework_2017}
Cheng Feng, Tingting Li, Zhanxing Zhu, and Deeph Chana.
\newblock A deep learning-based framework for conducting stealthy attacks in
  industrial control systems.
\newblock {\em arXiv:1709.06397 [cs]}, September 2017.

\bibitem{shapley1953}
Lloyd~S. Shapley.
\newblock Stochastic games.
\newblock {\em Proceedings of the National Academy of Sciences},
  39(10):1095--1100, 1953.

\bibitem{kress-gazit2007}
Hadas Kress-Gazit, Georgios~E. Fainekos, and George~J. Pappas.
\newblock Where's {W}aldo? sensor-based temporal logic motion planning.
\newblock In {\em Proceedings 2007 IEEE International Conference on Robotics
  and Automation}, pages 3116--3121. IEEE, 2007.

\bibitem{guo2013revising}
Meng Guo, Karl~H. Johansson, and Dimos~V. Dimarogonas.
\newblock Revising motion planning under linear temporal logic specifications
  in partially known workspaces.
\newblock In {\em 2013 IEEE International Conference on Robotics and
  Automation}, pages 5025--5032. IEEE, 2013.

\bibitem{bonakdarpour2018opportunities}
Borzoo Bonakdarpour, Jyotirmoy~V. Deshmukh, and Miroslav Pajic.
\newblock Opportunities and challenges in monitoring cyber-physical systems
  security.
\newblock In {\em International Symposium on Leveraging Applications of Formal
  Methods}, pages 9--18. Springer, 2018.

\bibitem{medhat2015runtime}
Ramy Medhat, Borzoo Bonakdarpour, Deepak Kumar, and Sebastian Fischmeister.
\newblock Runtime monitoring of cyber-physical systems under timing and memory
  constraints.
\newblock {\em ACM Transactions on Embedded Computing Systems (TECS)},
  14(4):1--29, 2015.

\bibitem{havelund2002synthesizing}
Klaus Havelund and Grigore Ro{\c{s}}u.
\newblock Synthesizing monitors for safety properties.
\newblock In {\em International Conference on Tools and Algorithms for the
  Construction and Analysis of Systems}, pages 342--356. Springer, 2002.

\bibitem{moonjookim_FormallySpecifiedMonitoring_1999}
{Moonjoo Kim}, M.~Viswanathan, H.~{Ben-Abdallah}, S.~Kannan, I.~Lee, and
  O.~Sokolsky.
\newblock Formally specified monitoring of temporal properties.
\newblock In {\em Proceedings of 11th {{Euromicro Conference}} on
  {{Real}}-{{Time Systems}}. {{Euromicro RTS}}'99}, pages 114--122, June 1999.

\bibitem{bozkurt2020}
Alper~Kamil Bozkurt, Yu~Wang, Michael~M. Zavlanos, and Miroslav Pajic.
\newblock Model-free reinforcement learning for stochastic games with linear
  temporal logic objectives, 2020.
\newblock arXiv:2010.01050 [cs.RO].

\bibitem{chatterjee2012}
Krishnendu Chatterjee and Thomas~A. Henzinger.
\newblock A survey of stochastic $\omega$-regular games.
\newblock {\em Journal of Computer and System Sciences}, 78(2):394 -- 413,
  2012.
\newblock Games in Verification.

\bibitem{baier2008}
Christel Baier and Joost-Pieter Katoen.
\newblock {\em Principles of Model Checking}.
\newblock MIT Press, Cambridge, MA, USA, 2008.

\bibitem{pajic_csm17}
Miroslav Pajic, James Weimer, Nicola Bezzo, Oleg Sokolsky, George~J. Pappas,
  and Insup Lee.
\newblock Design and implementation of attack-resilient cyberphysical systems:
  With a focus on attack-resilient state estimators.
\newblock {\em IEEE Control Systems Magazine}, 37(2):66--81, April 2017.

\bibitem{pajic_tcns17}
Miroslav Pajic, Insup Lee, and George~J. Pappas.
\newblock Attack-resilient state estimation for noisy dynamical systems.
\newblock {\em IEEE Transactions on Control of Network Systems}, 4(1):82--92,
  March 2017.

\bibitem{bezzo_iros14}
Nicola Bezzo, James Weimer, Miroslav Pajic, Oleg Sokolsky, George~J. Pappas,
  and Insup Lee.
\newblock Attack resilient state estimation for autonomous robotic systems.
\newblock In {\em 2014 IEEE/RSJ International Conference on Intelligent Robots
  and Systems}, pages 3692--3698, Sept 2014.

\bibitem{chang2018secure}
Young~Hwan Chang, Qie Hu, and Claire~J. Tomlin.
\newblock Secure estimation based {K}alman filter for cyber--physical systems
  against sensor attacks.
\newblock {\em Automatica}, 95:399--412, 2018.

\bibitem{elfar_cav19}
Mahmoud Elfar, Yu~Wang, and Miroslav Pajic.
\newblock Security-aware synthesis using delayed-action games.
\newblock In {\em Computer Aided Verification (CAV)}, pages 180--199. Springer
  International Publishing, 2019.

\bibitem{elfar_icra19}
Mahmoud Elfar, Haibei Zhu, Mary~L. Cummings, and Miroslav Pajic.
\newblock Security-aware synthesis of human-{UAV} protocols.
\newblock In {\em 2019 International Conference on Robotics and Automation
  (ICRA)}, pages 8011--8017, May 2019.

\bibitem{fainekos_TemporalLogicMotion_2009}
Georgios~E. Fainekos, Antoine Girard, Hadas {Kress-Gazit}, and George~J.
  Pappas.
\newblock Temporal logic motion planning for dynamic robots.
\newblock {\em Automatica}, 45(2):343--352, February 2009.

\bibitem{kress-gazit_SynthesisRobotsGuarantees_2018}
Hadas {Kress-Gazit}, Morteza Lahijanian, and Vasumathi Raman.
\newblock Synthesis for {{Robots}}: {{Guarantees}} and {{Feedback}} for {{Robot
  Behavior}}.
\newblock {\em Annual Review of Control, Robotics, and Autonomous Systems},
  1(1):211--236, 2018.

\bibitem{lesi_rtss17}
Vuk Lesi, Ilija Jovanov, and Miroslav Pajic.
\newblock Network scheduling for secure cyber-physical systems.
\newblock In {\em 2017 IEEE Real-Time Systems Symposium (RTSS)}, pages 45--55,
  Dec 2017.

\bibitem{hasan2016exploring}
Monowar Hasan, Sibin Mohan, Rakesh~B. Bobba, and Rodolfo Pellizzoni.
\newblock Exploring opportunistic execution for integrating security into
  legacy hard real-time systems.
\newblock In {\em 2016 IEEE Real-Time Systems Symposium (RTSS)}, pages
  123--134. IEEE, 2016.

\bibitem{bozkurt2019}
Alper~Kamil Bozkurt, Yu~Wang, Michael~M. Zavlanos, and Miroslav Pajic.
\newblock Control synthesis from linear temporal logic specifications using
  model-free reinforcement learning.
\newblock In {\em 2020 IEEE International Conference on Robotics and Automation
  (ICRA)}, pages 10349--10355. IEEE, 2020.

\bibitem{hahn2019}
Ernst~Moritz Hahn, Mateo Perez, Sven Schewe, Fabio Somenzi, Ashutosh Trivedi,
  and Dominik Wojtczak.
\newblock Omega-regular objectives in model-free reinforcement learning.
\newblock In {\em International Conference on Tools and Algorithms for the
  Construction and Analysis of Systems}, pages 395--412. Springer, 2019.

\bibitem{hahn2020}
Ernst~Moritz Hahn, Mateo Perez, Sven Schewe, Fabio Somenzi, Ashutosh Trivedi,
  and Dominik Wojtczak.
\newblock Model-free reinforcement learning for stochastic parity games.
\newblock In {\em 31st International Conference on Concurrency Theory (CONCUR
  2020)}. Schloss Dagstuhl-Leibniz-Zentrum f{\"u}r Informatik, 2020.

\bibitem{ehlers2011}
R{\"u}diger Ehlers.
\newblock Generalized {R}abin(1) synthesis with applications to robust system
  synthesis.
\newblock In {\em NASA Formal Methods Symposium}, pages 101--115. Springer,
  2011.

\bibitem{latvala2003}
Timo Latvala.
\newblock Efficient model checking of safety properties.
\newblock In {\em International SPIN Workshop on Model Checking of Software},
  pages 74--88. Springer, 2003.

\bibitem{csrl2020}
CPSL@Duke.
\newblock {CSRL}, 2020.
\newblock \url{https://gitlab.oit.duke.edu/cpsl/csrl}.

\bibitem{littman1994}
Michael~L Littman.
\newblock {M}arkov games as a framework for multi-agent reinforcement learning.
\newblock In {\em Machine learning proceedings 1994}, pages 157--163. Elsevier,
  1994.

\end{thebibliography}

\end{document}